\documentclass[11pt]{article}
\usepackage[utf8]{inputenc}
\usepackage[style = alphabetic,citestyle=alphabetic,maxbibnames=99,sorting=nyt,natbib=true,backref=true]{biblatex}
\addbibresource{references.bib}
\DefineBibliographyStrings{english}{%
  backrefpage = {Cited on page},
  backrefpages = {Cited on pages},
}

\usepackage{spencercommands}

\usepackage{epsf}
\usepackage{fancyhdr}
\usepackage{psfrag}

\usepackage[T1]{fontenc}
\usepackage[english]{babel} 
\usepackage{mathrsfs}
\usepackage[linesnumbered, ruled, vlined]{algorithm2e}
\usepackage[titletoc,toc,title]{appendix}
\usepackage{verbatim} 
\usepackage{csquotes} 
\usepackage{upquote} 
\usepackage{mleftright}

\newtheorem{theorem}{Theorem}[section]

\newtheorem{definition}[theorem]{Definition}

\usepackage{fullpage,times,url}
\usepackage{enumitem} 
\DeclareMathSizes{9}{8}{7}{5}
\date{\today}

\author{%
  Spencer Frei\\
  Simons Institute for the Theory of Computing \\
  University of California, Berkeley\\
  \texttt{frei@berkeley.edu} \\
  \and
  Quanquan Gu\\
  Department of Computer Science\\
  University of California, Los Angeles\\
  \texttt{qgu@cs.ucla.edu} \\
 }


\title{\textbf{Proxy Convexity: A Unified Framework \\for the Analysis of Neural Networks \\Trained by Gradient Descent}}

\begin{document}

\maketitle

\begin{abstract}
Although the optimization objectives for learning neural networks are highly non-convex, gradient-based methods have been wildly successful at learning neural networks in practice. This juxtaposition has led to a number of recent studies on provable guarantees for neural networks trained by gradient descent. Unfortunately, the techniques in these works are often highly specific to the particular setup in each problem, making it difficult to generalize across different settings. To address this drawback in the literature, we propose a unified non-convex optimization framework for the analysis of neural network training. We introduce the notions of proxy convexity and proxy Polyak-Lojasiewicz (PL) inequalities, which are satisfied if the original objective function induces a proxy objective function that is implicitly minimized when using gradient methods. We show that gradient descent on objectives satisfying proxy convexity or the proxy PL inequality leads to efficient guarantees for proxy objective functions. We further show that many existing guarantees for neural networks trained by gradient descent can be unified through proxy convexity and proxy PL inequalities.

\end{abstract}

\section{Introduction}\label{sec:intro}
Understanding the ability of gradient-based optimization algorithms to find good minima of non-convex objective functions has become an especially important problem due to the success of  gradient descent (GD) in learning deep neural networks.  Although there exist non-convex objective functions and domains for which GD will necessarily lead to sub-optimal local minima, it appears that for many problems of interest in deep learning, across domains as varied as natural language and images, these worst-case situations do not arise.  Indeed, a number of recent works have developed provable guarantees for GD when used for objective functions defined in terms of neural networks, despite the non-convexity of the underlying optimization problem~\cite{brutzkus2018sgd,allenzhu.3layer,cao2019generalization,jitelgarsky20.polylog,frei2020singleneuron,frei2021provable}.  To date, however, there has not been a framework which could unify the variegated approaches for guarantees in these settings. 

In this work, we introduce the notion of \textit{proxy convexity}
and demonstrate that many existing guarantees for minimizing neural network objective functions with gradient-based optimization fall into a problem satisfying proxy convexity.  Consider the following optimization problem,
\begin{equation}\label{eq:opt.problem}
    \min_{w\in \calW} f(w),
\end{equation}
where $\calW\subset \R^p$ is a parameter domain and $f:\R^p \to \R$ is a loss function.  We are interested in guarantees using the standard gradient descent algorithm, 
\[ \wt {t+1} = \wt {t} - \eta \nabla f(\wt {t}),\]
where $\eta>0$ is a fixed learning rate.  
We now introduce the first notion of proxy convexity we will consider in the paper.

\begin{definition}[Proxy convexity] 
We say that a function $f: \R^p \to \R$ satisfies $(g,h)$-\textit{proxy convexity} if there exist functions $g, h : \R^p \to \R$ such that for all $w,v\in \R^p$,
\[ \sip{\nabla f(w)}{w-v} \geq g(w) - h(v).\]
\end{definition}
Clearly, every convex function $f$ satisfies $(f, f)$-proxy convexity.   We next introduce the analogy of proxy convexity for the Polyak--\L ojasiewicz (PL) inequality~\cite{karimi2016.linearconvpl}. 
 
\begin{definition}[$g$-proxy, $\xi$-optimal PL inequality]
We say that a function $f: \R^p \to \R$ satisfies a $g$-proxy, $\xi$-optimal \textit{Polyak--\L ojasiewicz inequality} with parameters $\alpha>0$ and $\mu>0$ (in short, $f$ satisfies the $(g,\xi, \alpha, \mu)$-\textit{PL inequality}) if there exists a function $g: \R^p \to \R$ and scalars $\xi\in \R$, $\mu>0$ such that for all $w\in \R^p$,
\[ \norm{\nabla f(w)}^\alpha \geq \f 1 2 \mu \l( g(w) - \xi\r) .\]
\end{definition}
As we shall see below, the proxy PL inequality is a natural extension of the standard PL inequality.

Our main contributions are as follows.
\begin{enumerate}[leftmargin = *]
    \item When $f$ satisfies $(g,h)$-proxy convexity, and $f$ is either Lipschitz or satisfies a particular proxy-smoothness assumption, 
    then for any norm bound $R>0$, GD run for polynomial (in $1/\eps$ and $R$) number of iterations satisfies the following,
    \[ \min_{t<T} g(\wt {t}) \leq \min_{\norm{w}\leq R} h(w) + \eps.\]
    \item When $f$ satisfies a $(g, \xi, \alpha, \mu)$-proxy PL inequality and has Lipschitz gradients, GD run for a polynomial (in $1/\eps)$ number of iterations satisfies the following,
    \[ \min_{t<T} g(\wt {t}) \leq \xi + \eps.\]
    \item We demonstrate that many previous guarantees for neural networks trained by gradient descent can be unified in the framework of proxy convexity.
\end{enumerate}
As we will describe in more detail below, if a loss function $\ell$ is $(g,h)$-proxy convex or satisfies a $g$-proxy PL inequality, then the optimization problem is straightforward and the crux of the problem then becomes connecting guarantees for the proxy $g$ with approximate guarantees for $f$.  

\paragraph{Notation.}
We use uppercase letters to refer to matrices, and lowercase letters will either refer to vectors or scalars depending on the context.  For vectors $w$, we use $\norm w$ to refer to the Euclidean norm, and for matrices $W$ we use $\norm W$ to refer to the Frobenius norm.  We use the standard $O(\cdot)$, $\Omega(\cdot)$ notations to hide universal constants, with $\tilde O(\cdot)$ and $\tilde \Omega(\cdot)$ additionally hiding logarithmic factors. 

\section{Proxy Convexity in Comparison to Other Non-convex Optimization Frameworks}\label{sec:proxy.vs.other}
In this section, we describe how proxy convexity and proxy PL-inequalities relate to other notions in non-convex optimization.  In Section~\ref{sec:additional.related}, we will discuss additional related work.  First, recall that  a function $f$ is $(g,h)$-proxy convex if there exist functions $g$ and $h$ such that for all $w,v$,
\[ \sip{\nabla f(w)}{w-v} \geq g(w) - h(v).\]  
One notion from the non-convex optimization literature that is related to our notion of proxy convexity is that of \textit{invexity}~\cite{hanson1981invex}.   A function $f$ is invex if it is differentiable and there exists a vector-valued function $k(w,v)$ such that for any $w,v$,
\[ \sip{\nabla f(w)}{k(w,v)} \geq f(w) - f(v).\]
It has been shown that a smooth function $f$ is invex if and only if every stationary point of $f$ is a global minimum~\cite{craven1985invex}.  However, for many problems of interest involving neural networks, it is not the case that every stationary point will be a global optimum, which makes invexity a less appealing framework for understanding neural networks.  Indeed, we shall see in Section~\ref{example:single.neuron.proxy} below that if one considers the problem of learning a single ReLU neuron $x\mapsto \sigma(\sip{w}{x})= \max(0,\sip wx)$ under the squared loss, it is not hard to see that there exist stationary points which are not global minima (e.g., $w=0$ assuming the convention $\sigma'(0)=0$).   By contrast, we shall see that the single ReLU neuron does satisfy a form of proxy convexity that enables GD to find approximately (but not globally) optimal minima.  Thus even the simplest neural networks induce objective functions which are proxy convex and non-invex.  We shall see in Section~\ref{example:deep.relu.proxy.pl} that proxy convexity appears in the objective functions induced by wide and deep neural networks as well.

To understand how the proxy PL inequality compares to other notions in the optimization literature, recall that an objective function $f$ satisfies the standard PL inequality~\cite{polyak1963,lojasiewicz1963} if there exists $\mu>0$ such that
\[ \norm{\nabla f(w)}^2 \geq \f \mu 2 \l[ f(w) - f^* \r]\quad \forall w,\]
where $f^* = \min_w f(w)$.  Clearly, any stationary point of an objective satisfying the standard PL inequality is globally optimal.   Thus, the presence of local minima among stationary points in neural network objectives makes the standard PL inequality suffer from the same drawbacks that invexity does for understanding neural networks trained by gradient descent.  This further applies to any of the conditions which are known to imply the PL inequality, like weak strong convexity, the restricted secant inequality, and the error bound condition~\cite{karimi2016.linearconvpl}.\footnote{\citet{karimi2016.linearconvpl} shows that these conditions imply the PL inequality under the assumption that the objective function has Lipschitz-continuous gradients.} 

In comparison, the $(g, \xi, \alpha, \mu)$-proxy PL inequality is satisfied if there exists a function $g$ and constants $\xi>0$, $\alpha >0$ and $\mu>0$ such that
\[  \norm{\nabla f(w)}^\alpha \geq \f \mu 2 \l[ g(w) - \xi \r]\quad \forall w.\]
It is clear that if a function $f$ satisfies the standard PL inequality, then it satisfies the $(f, f^*, 2, \mu)$ proxy PL inequality.  Stationary points $w^*$ of objective functions satisfying the proxy PL inequality have $\norm{\nabla f(w^*)}=0$ which imply $g(w^*)\leq \xi$.  In the case that $g = f$, the slack error term $\xi$ allows for the proxy PL inequality framework to accommodate the possibility that stationary points may not be globally optimal (i.e. have objective value $f^* = \min_w f(w)$), but could be approximately optimal by, for example, having objective value at most $\xi = C \cdot f^*$ or $\xi = C \cdot \sqrt{f^*}$ for some constant $C\geq 1$.  When $g\neq f$, the proxy PL inequality allows for the possibility of analyzing a proxy loss function $g$ which is \textit{implicitly} minimized when using gradient-based optimization of the objective $f$.  

At a high level, proxy convexity and the proxy PL inequality are well-suited to situations where stationary points may not be \textit{globally} optimal, but may be approximately optimal with respect to a related optimization objective.  The proxy convexity framework allows for one to realize this through developing problem-specific analyses that connect the proxy objective $g$ to the original objective $f$.  As we shall see below, rich function classes like neural networks are often more easily analyzed by considering a proxy objective function that naturally appears when one analyzes the gradient of the loss. 

Finally, we note that~\citet{liu2020plstar} introduced a different generalization of the PL inequality, namely the PL$^*$ and $\mathrm{PL}_{\eps}^*$ inequalities, which relaxes the standard PL inequality definition so that the PL condition  only needs to hold on a subset of the domain.  In particular, a function $f:\R^p \to \R$ satisfies the PL$^*$ inequality on a set $S\subset \R^p$ if there exists $\mu>0$ such that
\[ \norm{\nabla f(w)}^2 \geq \mu f(w) \quad \forall w\in S.\]
Likewise, $f$ satisfies the PL$^*_\eps$ inequality on $S$ if there exists a set $S$ and $\eps\geq 0$ such that the PL$^*$ inequality holds on the set $S_\eps = \{ w\in S: f(w) \geq \eps\}$.   One can see that if $f$ satisfies the PL$^*_\eps$ inequality on $S$, then the function $g(w) := f(w) + \eps$ satisfies the $g$-proxy, $\eps$-optimal PL inequality on $S$. 

We wish to emphasize the differences in the framing and motivation of the PL$^*_\eps$ inequality by~\citet{liu2020plstar} and that of proxy convexity and the proxy PL inequality in this paper.~\citet{liu2020plstar} focus on the geometry of optimization in the overparameterized setting where one has a fixed set of samples $\{(x_i, y_i)\}_{i=1}^n$ and a parametric model class $g(x; w)$ (for $w\in \R^p$, $p>n$) and the goal is to solve $g(x_i; w) = y_i$ for all $i\in [n]$.  In this setting one can view the optimization problem as a nonlinear least squares system with $p$ unknowns and $n$ equations, and~\citet{liu2020plstar} use geometric arguments to show that when $p>n$ the PL$^*$ condition is satisfied throughout most of the domain.
They extend the PL$^*$ condition to the PL$^*_\eps$ condition with the motivation that in underparameterized settings, or when performing early stopping, there may not exist interpolating solutions.  By contrast, in our work we consider general optimization problems (rather than objective functions defined in terms of an empirical average over samples) which hold regardless of `overparameterization'.  Furthermore, our aim in this work is to develop a framework that allows for formal characterizations of optimization problems where stationary points are not globally optimal with respect to the original objective but are approximately optimal with respect to proxy objective functions.  We will demonstrate below that such a framework can help unify a number of works on learning with neural networks trained by gradient descent.

\section{Proxy PL Inequality Implies Proxy Objective Guarantees}\label{sec:proxy.pl}
In this section, we show that for loss functions satisfying a proxy PL inequality, gradient descent 
efficiently minimizes the proxy.  We then go through different examples of neural network optimization problems where the proxy PL inequality is satisfied.

\subsection{Main Result} 
We present our main theorem for the proxy PL inequality below.  We leave the proofs for Section~\ref{sec:proofs}.  
\begin{theorem}\label{thm:proxy.pl}
Suppose $f(w)$ satisfies the $(g(\cdot), \xi, \alpha, \mu)$-proxy PL inequality for some function $g(\cdot):\R^p\to \R$.  Assume that $f$ is non-negative and has $L_2$-Lipschitz gradients.   Then for any $\eps>0$, provided $\eta < 1/L_2$, 
GD with fixed step size $\eta$ and run for $T = 2\eta^{-1} (\mu \eps/2)^{-2/\alpha} f(\wt 0)$ iterations results in the following guarantee,
\begin{equation}\label{eq:proxy.pl.identity}
    \min_{t<T} g(\wt {t}) \leq \xi + \eps.
\end{equation}
\end{theorem}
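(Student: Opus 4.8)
The plan is to run the textbook Polyak--\L ojasiewicz convergence argument, with the single modification that the final step invokes the \emph{proxy} PL inequality rather than the standard one. The first step is to use smoothness to show that gradient descent decreases $f$ itself. Since $f$ has $L_2$-Lipschitz gradients, the descent lemma applied to the update $\wt{t+1} = \wt{t} - \eta\nabla f(\wt{t})$ gives
\[ f(\wt{t+1}) \leq f(\wt{t}) - \eta\l(1 - \f{L_2\eta}{2}\r)\norm{\nabla f(\wt{t})}^2 . \]
Because $\eta < 1/L_2$ we have $1 - L_2\eta/2 > 1/2$, so $f(\wt{t+1}) \leq f(\wt{t}) - \f\eta2\norm{\nabla f(\wt{t})}^2$; in particular the sequence $\{f(\wt{t})\}$ is non-increasing.

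Next I would telescope this bound over $t = 0,\dots,T-1$ and use that $f$ is non-negative, so that $f(\wt{T}) \geq 0$:
\[ \f\eta2\sum_{t=0}^{T-1}\norm{\nabla f(\wt{t})}^2 \leq f(\wt{0}) - f(\wt{T}) \leq f(\wt{0}). \]
Hence, letting $t^\star < T$ be an index minimizing $\norm{\nabla f(\wt{t})}$ over $t<T$, we get $\norm{\nabla f(\wt{t^\star})}^2 \leq 2 f(\wt{0})/(\eta T)$, and since $x\mapsto x^{\alpha/2}$ is non-decreasing on $[0,\infty)$ this yields $\norm{\nabla f(\wt{t^\star})}^\alpha \leq \l(2 f(\wt{0})/(\eta T)\r)^{\alpha/2}$.

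Finally I would apply the $(g,\xi,\alpha,\mu)$-proxy PL inequality at $w = \wt{t^\star}$, which rearranges to $g(\wt{t^\star}) - \xi \leq \f2\mu\norm{\nabla f(\wt{t^\star})}^\alpha$. Combining with the previous display,
\[ \min_{t<T} g(\wt{t}) \leq g(\wt{t^\star}) \leq \xi + \f2\mu\l(\f{2 f(\wt{0})}{\eta T}\r)^{\alpha/2}, \]
and one checks by direct substitution that the stated choice $T = 2\eta^{-1}(\mu\eps/2)^{-2/\alpha} f(\wt{0})$ makes the last term exactly $\eps$, which is \eqref{eq:proxy.pl.identity}. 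The argument is essentially routine; the only points that require care are tracking the exponent $\alpha$ coming from the proxy PL inequality against the exponent $2$ produced by the descent lemma --- this is what forces the $\alpha/2$ power in the rate and the particular power of $\mu\eps/2$ in the iteration count $T$ --- and noting that the index $t^\star$ that controls the gradient norm is also an admissible index for the outer minimum $\min_{t<T} g(\wt{t})$.
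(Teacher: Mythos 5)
Your proposal is correct and follows essentially the same route as the paper's proof: descent lemma, telescoping against non-negativity of $f$, and then the proxy PL inequality to convert the gradient-norm bound into a bound on $g$. The only (immaterial) difference is bookkeeping --- you minimize the gradient norm first and invoke the proxy PL inequality once at $t^\star$, whereas the paper applies the inequality at every iterate before taking the minimum; if anything your ordering sidesteps any worry about raising a possibly negative quantity $g(\wt{t})-\xi$ to the power $2/\alpha$.
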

\begin{sloppypar}
To get a feel for how a proxy PL inequality might be useful for learning neural networks, consider a classification problem with labels $y\in \{\pm 1\}$, and suppose that $N(w; x)$ is a neural network function parameterized by some vector of weights $w$ (we concatenate all weights into one weight vector).  A standard approach for learning neural networks is to minimize the cross-entropy loss $\ell(y N(w; x)) = \log\big(1+\exp(-y N(w; x))\big)$ using gradient descent on the empirical risk induced by a set of $n$ i.i.d. samples $\{(x_i, y_i)\}_{i=1}^n$.  Using the variational form of the norm, we have\end{sloppypar}
\begin{align}\nonumber
    \norm{\nabla \l( \f 1 n \summ i n \ell(y_i N(w; x_i)) \r) } &= \sup_{\norm{u}=1} \ip{\nabla \l( \f 1 n \summ i n \ell(y_i N(w; x_i)) \r) }{u} \\
    &\geq \f 1 n \summ i n -\ell'(y_i N(w; x_i)) \cdot y_i \sip{\nabla N(w; x_i)}{v},\label{eq:variational.norm.loss}
\end{align}
where $v$ is any vector satisfying $\norm{v}=1$.  Now, although the function $-\ell'$ is not an upper bound for $\ell$ (indeed, $-\ell' < \ell$), it \textit{is} an upper bound for a constant multiple of the zero-one loss, and can thus serve as a proxy for the classification error.  
This is because for convex and decreasing losses $\ell$, the function $-\ell'$ is non-negative and decreasing, and so we can bound $\ind(z \leq 0) \leq \ell'(z)/\ell'(0)$.

Thus, if one can bound the risk under $-\ell'$ (and $\ell'(0)\neq 0$), one has a bound for the classification error. 
Indeed, this property has been used in a number of recent works on neural networks~\cite{cao2019generalization,frei2019resnet,jitelgarsky20.polylog,frei2021provable}.  This lets the $-\ell'$ term in \eqref{eq:variational.norm.loss} represent the desired proxy $g$ in the definition of the $(g, \xi, \alpha, \mu)$-proxy PL inequality.  For neural network classification problems, this reduces the problem of showing the neural network has small classification error to that of constructing a vector $v$ that allows for the quantity $y_i \sip{\nabla N(w; x_i)}{v}$ to be large and non-negative for each sample $(x_i,y_i)$.  The quantity $y_i\sip{\nabla N(w; x_i)}{v}$ can be thought of as a margin function that is large when the gradient of the neural network loss points in a good direction.  Although we shall see below that in some instances one can derive a lower bound for $y \sip{\nabla N(w; x)}{v}$ that holds for \textit{all} $w$, $x$, and $y$, a more general approach would be to show that along the gradient descent trajectory $\{\wt t\}$, a lower bound for $y_i \sip{\nabla N(\wt t; x_i)}{v}$ holds for each $i$.\footnote{Although our results as stated would not immediately apply in this setting, the proof would be the same up to trivial modifications.}  

In the remaining subsections, we will show that a number of recent works on learning neural networks with gradient descent utilized proxy PL inequalities.  In our first example, we consider recent work by~\citet{charles2018stability} that directly used a (standard) PL inequality.

\subsection{Standard PL Inequality for Single Leaky ReLU Neurons and Deep Linear Networks}\label{example:single.neuron.standard.pl}

\citet{charles2018stability} showed that the standard PL inequality holds in two distinct settings.  The first is that of a single leaky ReLU neuron $x\mapsto \sigma(\sip{w}{x})$, where $\sigma(z)=\max(c_\sigma z, z)$ for $c_\sigma \neq 0$.  They showed that if $s_{\mathrm{min}}(X)$ is the smallest singular value of the matrix $X \in \R^{n\times d}$ of $n$ samples, then for a $\lambda$-strongly convex loss $\ell$, the loss $f(w) = \ell(\sigma(Xw))$ satisfies the standard $\mu$-PL inequality, i.e., the $(f, f^*,2, \mu)$-proxy PL inequality for $\mu = \lambda s_{\mathrm{min}}(X)^2 c_\sigma^2$~\cite[Theorem 4.1]{charles2018stability}.  

\begin{sloppypar}The same authors also showed that under certain conditions the standard PL inequality holds when the neural network takes the form $N(w; x) = W_L \cdots W_1x$ and the loss is the squared loss, ${f(w) = \nicefrac 12 \snorm{Y - N(w; X)}_F^2}$, where $X\in \R^{n\times d}$ is the feature matrix and $Y\in \R^n$ are the labels.  In particular, they showed that if $s_{\mathrm{min}}(W_i)\geq \tau >0$ throughout the gradient descent trajectory, then $f$ satisfies the standard $\mu$-PL inequality for $\mu = L \tau^{2L-2} / \pnorm{(XX^\top)^{-1} X}F^2$~\cite[Theorem 4.5]{charles2018stability}.  \end{sloppypar}

The standard PL inequality has been used by a number of other authors in the deep learning theory literature, see e.g. \citet[Theorem 1]{xie2017diverse},~\citet[Eq. 2.3]{hardt2018identity},~\citet[Theorem 1]{zhou2017characterization},~\citet[Theorem 3]{shamir2019exponential.linear}.

In our next example, we show that a proxy PL inequality holds for deep neural networks in the neural tangent kernel (NTK) regime. 
\subsection{Proxy PL Inequality for Deep Neural Networks in NTK Regime}\label{example:deep.relu.proxy.pl}

Consider the class of deep, $L$-hidden-layer ReLU networks, either with or without residual connections:
\begin{align*} 
N_1(w; x) &= \sigma(W_1 x),\quad N_l(w; x) = s_l N_{l-1}(w; x) + \sigma(W_l N_{l-1}(w; x)),\, l=2,\ldots,L,\\
 N(w; x) &= \summ j m a_j [N_L(w; x)]_j,
 \end{align*}
where $s_l =0$ for fully-connected networks and $s_l=1$ for residual networks, and we collect the parameters $W_1, \dots, W_L$ into the vector $w$.  Cao and Gu~\cite[Theorem 4.2]{cao2019generalization}, Frei, Cao, and Gu~\cite[Lemma 4.3]{frei2019resnet}, and~\citet[Lemma B.5]{zou2019gradient} have shown that under certain distributional assumptions and provided the iterates of gradient descent stay close to their intialization, for samples $\{(x_i, y_i)\}_{i=1}^n$ and objective function 
\begin{equation}\label{eq:logistic.training}
 f(w) := \f 1 n \summ i n \ell(y_i N(w; x_i)),\quad \ell(z) = \log(1+\exp(-z)),
 \end{equation}
one can guarantee that the following proxy PL inequality holds:
\begin{equation}
    \norm{\nabla  f(w) } \geq C_1 \cdot \f 1 n \summ i n -\ell'(y_i N(w; x_i)) =: C_1 g(w).
\end{equation}
One can see that the loss $f$ satisfies the $(g, 0, 1, 2C_1 )$-proxy PL inequality, which shows that approximate stationary points of the original objective have small $g$ loss.   Since $\ind(z < 0) \leq 2 \cdot -\ell'(z)$, small $g$ loss implies small classification error.    Note that since the ReLU is not smooth, the loss $f$ will not have Lipschitz gradients, and thus a direct application of Theorem \ref{thm:proxy.pl} is not possible.  Instead, the authors show that in the NTK regime, the loss obeys a type of semi-smoothness that still allows for an analysis simliar to that of Theorem \ref{thm:proxy.pl}.

\subsection{Proxy PL Inequality for One-Hidden-Layer Networks Outside NTK Regime}
Consider a one-hidden-layer network with activation function $\sigma$, parameterized by $w = \mathrm{vec}(W)$, where $W\in \R^{m\times d}$ has rows $w_j$,
\begin{equation} \label{eq:onehiddenlayer.nn}
N(w; (x,y)) = \summ j m a_j \sigma(\sip{w_j}{x}),
\end{equation}
Above, the second layer weights $\{a_j\}_{j=1}^m$ are randomly initialized and fixed at initialization, but $w = \mathrm{vec}((w_1,\dots, w_m))$ are trained.  Assume $\sigma$ satisfies $\sigma'(z)\geq c_\sigma >0$ for all $z$ (e.g., the leaky ReLU activation).  Consider training with gradient descent on the empirical average of the logistic loss defined in terms of i.i.d. samples $\{(x_i, y_i)\}_{i=1}^n$ as in~\eqref{eq:logistic.training}.  
 Frei, Cao, and Gu have shown~\cite[Lemma 3.1]{frei2021provable} that there exists a vector $v\in \R^{md}$ with $\snorm{v}=1$ such that for `well-behaved' distributions, there are absolute constants $C_1, C_2>1$ such that with high probability it holds that for any $w$ and $i$,
\[ y_i \sip{\nabla N(w; x_i)}{v} \geq C_1[ c_\sigma   - C_2 \sqrt{\opt}],\]
where $\opt$ is the best classification error achieved by a halfspace over $\calD$.  Since $|\ell'|\leq 1$, this establishes the following proxy-PL inequality,
\begin{align*} 
\norm{\nabla f(w)} &= \sup_{\norm{z}=1} \sip{\nabla f(w)}{z} \\
&\geq \f 1 n \summ i n -\ell'(y_i N(w; x_i)) \cdot y_i \sip{ \nabla N(w; x_i)}{v} \\
&\geq C_1 c_\sigma \cdot \l[\f 1 n \summ i n -\ell'(y_i N(w; x_i))   -  c_\sigma^{-1} C_2 \sqrt{\opt}\r].
\end{align*}
As in Section~\ref{example:deep.relu.proxy.pl}, by defining $g(w) = \nicefrac 1n \summ i n -\ell'(y_i N(w; x_i))$, the above inequality shows that $f$ satisfies the $(g, c_\sigma^{-1}C_2 \sqrt{\opt}, 1, 2C_1 c_\sigma )$-proxy PL inequality.  Thus, provided we can show that $f$ has $L_2$-Lipschitz gradients for some constant $L_2>0$, Theorem \ref{thm:proxy.pl} shows that for $T$ large enough,
\begin{align*} 
\min_{t<T} \f 1 n \summ i n \ind(y_i \neq \sgn( N(\wt t; x_i))) &\leq \min_{t<T} \f{1}{-\ell'(0)} \cdot \f 1 n \summ i n -\ell'(y_i N(\wt t; x_i)) \\
&\leq \frac{c_\sigma^{-1} C_2 \sqrt{\opt}}{-\ell'(0)} + \eps.
\end{align*}
Provided $\sigma$ is such that $\sigma'$ is continuous and differentiable, then $f$ has $L_2$-Lipschitz gradients and thus the guarantees will follow.  In particular, this analysis follows if $\sigma$ is any smoothed approximation to the leaky ReLU which satisfies $\sigma'(z)\geq c_\sigma>0$. 

Note that the above optimization analysis is an original contribution of this work as we utilize a completely different proof technique than that of~\citet{frei2021provable}.  In that paper, the authors utilize a Perceptron-style proof technique that analyzes the correlation $\sip{\wt t}{v}$ of the weights found by gradient descent and a reference vector $v$.  Their proof relies crucially on the homogeneity of the (non-smooth) leaky ReLU activation, namely that $z\sigma'(z)=\sigma(z)$ for $z\in \R$, and cannot accommodate more general smooth activations.  By contrast, the proxy PL inequality proof technique in this example relies upon the smoothness of the activation function and is more similar to smoothness-based analyses of gradient descent.

\section{Proxy Convexity Implies Proxy Objective Guarantees}\label{sec:proxy.convex}

In this section, we show that if $f$ satisfies $(g,h)$-proxy convexity, we can guarantee that by minimizing $f$ with gradient descent, we find weights $w$ for which $g(w)$ is at least as small as the smallest-possible loss under $h$.  We then go through examples of neural network optimization problems that satisfy proxy convexity.

\subsection{Main Result}
We present two versions of our result: one that relies upon fewer assumptions on the loss $f$ but needs a small step size, and another that requires a proxy smoothness assumption on $f$ but allows for a constant step size.  The proofs for the theorem are given in Section \ref{sec:proofs}. 
\begin{theorem}\label{thm:proxy.convexity}
Suppose that $f:\R^p \to \R$ is $(g,h)$-proxy convex. 

(a) Assume there exists $L_1>0$ such that for all $w$, $\norm{\nabla f(w)}^2\leq L_1^2$.  Then for any $v\in \R^p$ and any $\eps>0$, performing GD on $f(w)$ from an arbitrary initialization $\wt 0$ with fixed step size $\eta \leq \eps L_1^{-2}$ for $T =  \eta^{-1} \eps^{-1} \norm{\wt 0-v}^2$ iterations implies that, 
\[ \min_{t<T} g(\wt {t}) \leq h(v) + \eps.\]

(b) Assume there exists $L_2>0$ such that for all $w$, $ \norm{\nabla f(w; z)}^2 \leq 2 L_2 g(w)$.  Then for any $v\in \R^p$ and any $\eps>0$, performing GD on $f(w)$ from an arbitrary initialization with fixed step size $\eta \leq L_2^{-1}/2$ for $T =  \eta^{-1} \eps^{-1} \norm{\wt 0-v}^2$ implies that,
\[ \min_{t<T} g(\wt {t}) \leq (1 + 2\eta L_2) h(v) + \eps.\]
\end{theorem}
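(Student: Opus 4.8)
The plan is to run the classical ``distance to a reference point'' potential argument underlying convergence proofs for (sub)gradient descent on convex functions, but with the $(g,h)$-proxy convexity inequality substituted for the convexity inequality. Fix an arbitrary $v\in\R^p$ and set $D_t := \norm{\wt t - v}^2$. Expanding a single gradient step,
\[ D_{t+1} = \norm{\wt t - \eta\nabla f(\wt t) - v}^2 = D_t - 2\eta\,\sip{\nabla f(\wt t)}{\wt t - v} + \eta^2\norm{\nabla f(\wt t)}^2, \]
and $(g,h)$-proxy convexity gives $\sip{\nabla f(\wt t)}{\wt t - v} \geq g(\wt t) - h(v)$, so that
\[ D_{t+1} \leq D_t - 2\eta\left(g(\wt t) - h(v)\right) + \eta^2\norm{\nabla f(\wt t)}^2. \]
The two parts differ only in how the last term is bounded.

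For part (a), I would use $\norm{\nabla f(\wt t)}^2 \leq L_1^2$ to get $2\eta\left(g(\wt t) - h(v)\right) \leq D_t - D_{t+1} + \eta^2 L_1^2$, then telescope over $t = 0,\dots,T-1$; since $D_T \geq 0$ this yields $2\eta\sum_{t=0}^{T-1}\left(g(\wt t) - h(v)\right) \leq D_0 + T\eta^2 L_1^2$, and bounding the sum below by $T\left(\min_{t<T} g(\wt t) - h(v)\right)$ gives
\[ \min_{t<T} g(\wt t) \leq h(v) + \frac{D_0}{2\eta T} + \frac{\eta L_1^2}{2}. \]
The choice $\eta \leq \eps L_1^{-2}$ makes the last term at most $\eps/2$, and $T = \eta^{-1}\eps^{-1}\norm{\wt 0 - v}^2$ makes the middle term equal to $\eps/2$, which is the claimed bound.

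For part (b), I would instead invoke the proxy-smoothness bound $\norm{\nabla f(\wt t)}^2 \leq 2L_2 g(\wt t)$ (which in particular forces $g \geq 0$), giving $D_{t+1} \leq D_t - 2\eta(1-\eta L_2)g(\wt t) + 2\eta h(v)$. With $\eta \leq L_2^{-1}/2$ the coefficient $2\eta(1-\eta L_2)$ is positive, so telescoping produces $2\eta(1-\eta L_2)\sum_{t=0}^{T-1} g(\wt t) \leq D_0 + 2\eta T h(v)$, hence $\min_{t<T} g(\wt t) \leq \frac{1}{1-\eta L_2}\left(\frac{D_0}{2\eta T} + h(v)\right)$. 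The only slightly delicate step is converting $\frac{1}{1-\eta L_2}$ into the stated factor $1 + 2\eta L_2$: since $(1+2x)(1-x) = 1 + x - 2x^2 \geq 1$ for $x\in[0,1/2]$, we have $\frac{1}{1-\eta L_2} \leq 1 + 2\eta L_2$ whenever $\eta L_2 \leq 1/2$. Finally $T = \eta^{-1}\eps^{-1}\norm{\wt 0 - v}^2$ makes $\frac{D_0}{2\eta T} = \eps/2$, and since $1 + 2\eta L_2 \leq 2$ the corresponding term is at most $\eps$, yielding $\min_{t<T} g(\wt t) \leq (1 + 2\eta L_2)h(v) + \eps$.

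I do not expect a genuine conceptual obstacle here: the argument is a routine descent-lemma telescoping, with the proxy convexity inequality playing the role that first-order convexity plays in the standard proof. The only points that need care are the bookkeeping of the step-size restrictions (choosing $\eta$ exactly so that the gradient-norm error term is $\leq \eps/2$ in (a), and so that $1-\eta L_2 > 0$ in (b)) and, in part (b), the elementary estimate $\tfrac{1}{1-x}\leq 1+2x$ on $[0,1/2]$ used to state the bound in the clean advertised form.
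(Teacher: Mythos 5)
Your proposal is correct and follows essentially the same argument as the paper's proof: the same potential $\norm{\wt t - v}^2$, the same one-step expansion combined with the proxy convexity inequality, the same telescoping, and in part (b) the same elementary bound $\tfrac{1}{1-x}\leq 1+2x$ on $[0,1/2]$ together with $1+2\eta L_2\leq 2$ to absorb the $\eps/2$ term. Nothing is missing.
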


In order for $(g,h)$-proxy convexity to be useful, there must be a way to relate guarantees for $g$ into guarantees for the desired objective function $f$.  In the remainder of this section, we will discuss two neural network learning problems which satisfy proxy convexity and for which the proxy objectives are closely related to the original optimization problem.    Our first example is the problem of learning a neural network with a single nonlinear unit.

\subsection{Single ReLU Neuron Satisfies Proxy Convexity}\label{example:single.neuron.proxy}

Consider the problem of learning a single neuron $x\mapsto \sigma(\sip{w}{x})$ under the squared loss, where $\sigma$ is the ReLU activation $\sigma(z) = \max(0,z)$.  The objective function of interest is
\[ F(w) = \E_{(x,y)\sim \calD} \nicefrac 12 (\sigma(\sip{w}{x}) - y)^2,\]
where $\calD$ is a distribution over $(x,y) \in \R^p \times \R$.  
Denote
\[ F^* := \min_{\norm{w}\leq 1} F(w).\]
It is known that $F$ is non-convex~\cite{yehudai20}.  Under the assumption that learning sparse parities with noise is computationally hard, it is known that no polynomial time algorithm can achieve risk $F^*$ exactly when $\calD$ is the standard Gaussian; moreover, it is known that (unconditionally) the standard gradient descent algorithm cannot achieve risk  $F^*$~\cite{goel2019relugaussian}.\footnote{This stands in contrast to learning a single \textit{leaky} ReLU neuron $x\mapsto \max(\alpha x, x)$ for $\alpha \neq 0$, which as we showed in Section~\ref{example:single.neuron.standard.pl} can be solved using much simpler techniques.}   However, Frei, Cao, and Gu~\cite{frei2020singleneuron} showed that although $F$ is non-convex and no algorithm can achieve risk $F^*$, $F$ does satisfy a form of proxy convexity that allows for gradient descent to achieve risk $O(\sqrt{F^*})$.  They showed that for samples $(x_i,y_i)\sim \calD$, the loss function
\[ f_i(w) = \nicefrac 12 (\sigma(\sip{w}{x_i}) - y_i)^2\]
satisfies $(g_i,h_i)$-proxy convexity along the trajectory of gradient descent, where
\begin{align*} 
g_i(w) &= 2 \l[ \sigma(\sip{w}{x_i}) - \sigma(\sip{v^*}{x_i}) \r]^2 \sigma'(\sip{w}{x_i}),\\
h_i(v) &= |\sigma(\sip{v^*}{x_i}) - y_i| = \sqrt{ 2f_i(v^*)},
\end{align*}
where $v^*$ is the population risk minimizer of $F(w)$ (see their Eq. (3.13)).    Moreover, they showed that for some $L_1>0$, (see their Eq. (3.9)),
\[ \norm{\nabla f_i(w)}^2 \leq L_1^2.\]
Thus, given $n$ i.i.d. samples $\{(x_i, y_i)\}_{i=1}^n \iid \calD$, if one considers the empirical risk minimization problem and defines
\[ f(w) := \f 1 n \summ i n f_i(w),\quad g(w) := \f 1 n \summ i n g_i (w),\quad h(w) := \f 1 n \summ i n h_i(w),\]
then $f$ satisfies $(g, h)$-proxy convexity.  Thus,
 Theorem \ref{thm:proxy.convexity} implies that GD with step size $\eta \leq \eps L_1^{-2}$ and $T = \eta^{-1} \eps^{-1} \norm{\wt 0-v^*}^2$ iterations will find a point $\wt {t}$ satisfying
\begin{align*} 
g(\wt {t}) \leq h(v^*) + \eps.
\end{align*}
By using uniform convergence-based arguments, it is not hard to show that with enough samples, the empirical quantities $h(w)$ and $g(w)$ are close to their expected values,
\begin{align*} 
g(\wt {t}) &\approx G(\wt t) :=  2\E\l[(\sigma(\sip{\wt {t}}{x})-\sigma(\sip{v^*}{x}))^2 \sigma'(\sip{\wt {t}}{x})\r],\\
h(v) &\approx H(v) := \E|\sigma(\sip{v^*}{x})-y| \leq \sqrt{\E[(\sigma(\sip{v^*}{x}) - y)^2]} = \sqrt{2F^*}.
\end{align*} 
In particular, proxy convexity allows for $G(\wt {t}) \leq O(\sqrt{F^*})$. 
The authors then show that under some distributional assumptions on $\calD$, $G(\wt {t}) = O(\sqrt{F^*})$ implies $F(\wt {t}) = O(\sqrt{F^*})$~\cite[Lemma 3.5]{frei2020singleneuron}.  Thus, the optimization problem for $F$ induces a proxy convex optimization problem defined in terms of $G$ which yields guarantees for $G$ in terms of $H$, and this in turn leads to approximate optimality guarantees for the original objective $F$. 

In our next example, we show that a number of works on learning one-hidden-layer ReLU networks in the neural tangent kernel regime~\cite{jacot2018ntk} can be cast as problems satisfying proxy convexity. 

\subsection{Proxy Convexity for One-Hidden-Layer ReLU Neural Networks in the NTK Regime}\label{example:onelayer.relu.ntk.proxy}

Consider the class of one-hidden-layer ReLU networks consisting of $m$ neurons,
\begin{equation} \nonumber 
N(w; (x,y)) = \summ j m a_j \sigma(\sip{w_j}{x}),
\end{equation}
where the $\{a_j\}_{j=1}^m$ are randomly initialized and fixed at initialization, but $w = \mathrm{vec}((w_1,\dots, w_m))$ are trained.  Suppose we consider a binary classification problem, where $y_i\in \{\pm 1\}$ and we minimize the cross-entropy loss $\ell(z) = \log(1+\exp(-z))$ for samples $\{(x_i,y_i)\}_{i=1}^n$, so that the objective function is,
\[ f_i(w) = \ell\big(y_i N(w; (x_i,y_i))\big),\quad f(w) = \f 1 n \summ i n f_i(w).\]
Ji and Telgarsky~\cite[Proof of Lemma 2.6]{jitelgarsky20.polylog} showed that there exists a function $\tilde h(w, v)$ such that the iterates of gradient descent satisfy
\[ \sip{\nabla f(w)}{w-v} \geq f(w) - \tilde h(w, v).\]
Under the assumption that the iterates of gradient descent stay close to the initialization (i.e., the neural tangent kernel regime), they show that $\tilde h(w, v) \leq \eps$ under distributional assumptions, and thus $f(w)$ will satisfy $(f, \tilde h \equiv \eps)$-proxy convexity.  Moreover, we have the following smoothness property,
\begin{align*}
\snorm{\nabla f(w)}^2 &= \norm{\f 1 n \summ i n \nabla \ell\big(y_i N(w; (x_i,y_i))\big) }^2\\
&\overset{(i)}\leq \f 1 n \summ i n \snorm{\nabla \ell\big(y_i N(w; (x_i,y_i))\big)}^2\\
&= \f 1 n \summ i n \snorm{\nabla N(w; (x_i, y_i))}^2 \big[\ell'\big(y_i N(w; (x_i,y_i))\big)\big]^2 \\
&\overset{(ii)}\leq L_2 \cdot \f 1 n \summ i n \big[\ell'\big(y_i N(w; (x_i,y_i))\big)\big]^2 
\end{align*}
Above, inequality $(i)$ uses Jensen's inequality while $(ii)$ uses that the ReLU activation is Lipschitz, and $L_2$ is a quantity that depends on the (fixed) values of $\{a_j\}$.  Since the logistic loss satisfies $[\ell'(z)]^2 \leq \ell(z)$, the above shows that $\snorm{\nabla f(w)}^2 \leq L_2 f(w)$, and thus we can apply Theorem \ref{thm:proxy.convexity}(b) to show that GD with large step sizes can achieve $\min_{t<T} f(\wt {t})\leq \eps$ for the cross-entropy loss. 

In another problem of learning one-hidden-layer networks, Allen-Zhu, Li, and Liang~\cite[Proof of Lemma B.4]{allenzhu.3layer} show that there exists a proxy loss function $g(w)$  such that provided the neural network weights stay close to their initialized values, $f(w)$ satisfies $(g, g + \eps)$ proxy convexity.  Using a similar argument as above, since the cross-entropy loss satisfies $[\ell'(z)]^2 \leq \ell(z)$, part (b) of Theorem \ref{thm:proxy.convexity}(b) is applicable so that GD-trained neural networks in the NTK regime satisfy $\min_{t<T} f(\wt t) \leq \min_v g(v) + \eps$.   They further show that the proxy loss $g$ is close to the cross entropy loss, implying a bound of the form $\min_{t<T} f(\wt t)\leq \min_v f(v) + \eps$.

\section{Proof of the Main Results}\label{sec:proofs}
In this section we provide the proofs of the theorems given in Sections \ref{sec:proxy.pl} and \ref{sec:proxy.convex}.  

We first give the proof of Theorem \ref{thm:proxy.pl} which provides guarantees for learning with objectives satisfying proxy PL inequalities.  
\begin{proof}[Proof of Theorem \ref{thm:proxy.pl}]
Since $f$ has $L_2$-Lipschitz gradients, we have for any $w,w'$,
\begin{equation}  \nonumber 
    f(w) \leq f(w') + \sip{\nabla f(w')}{w-w'} + \f {L_2}2 \norm{w-w'}^2.
\end{equation}
Taking $w=\wt {t+1}$, $w'=\wt {t}$,
\begin{align}  \nonumber 
    f(\wt {t+1}) &\leq f(\wt {t}) - \eta \norm{\nabla f(\wt {t})}^2 + \f{ \eta^2 L_2}{2} \norm{\nabla f(\wt {t})}^2 \\
    &= f(\wt {t}) - \eta \l[ 1 - \eta L_2/2\r] \norm{\nabla f(\wt {t})}^2.
\end{align}
Since $\eta <1/L_2$, we have $(1-\eta L_2/2)^{-1} \leq  2$, and thus we can rearrange the above to get
\begin{align}  \nonumber 
    \norm{\nabla f(\wt {t})}^2 &\leq \f 1 {\eta (1 - \eta L_2/2)} [f(\wt {t}) - f(\wt {t+1})] \\
    &\leq \f 2 \eta [f(\wt {t}) - f(\wt {t+1})].
\end{align}
Summing the above from $t=0$ to $t=T-1$ and using that $f$ is non-negative, we get
\begin{equation}  \nonumber 
    \frac 1 {T} \summm t 0 {T-1} \norm{\nabla f(\wt {t})}^2 \leq \f{2( f(\wt 0) - f(\wt {T}))}{\eta T} \leq \f{2 f(\wt 0)}{\eta T}
\end{equation}
Using the definition of proxy PL inequality, this implies
\begin{equation}  \nonumber 
    \frac 1 {T} \summm t 0 {T-1} (\mu/2)^{2/\alpha} (g(\wt {t}) - \xi)^{2/\alpha} \leq \frac 1 T \summm t 0 {T-1} \snorm{\nabla f(\wt {t})}^2 \leq  \f{2 f(\wt 0)}{\eta T}.
\end{equation}
Taking the minimum over $t<T$ and re-arranging terms, this means
\begin{equation}  \nonumber 
    \min_{t<T} (g(\wt {t})-\xi)^{2/\alpha} \leq \f{ 2 f(\wt 0)}{\eta T (\mu/2)^{2/\alpha}}.
\end{equation}
Therefore, we have
\begin{equation}  \nonumber 
    \min_{t<T} g(\wt {t}) \leq \xi + \f 2 \mu \cdot \l( \f{ 2f(\wt 0)}{\eta T} \r)^{\alpha/2}.
\end{equation}
Taking $T = 2\eta^{-1} f(\wt 0) (\mu \eps/2)^{-2/\alpha}$, we get \eqref{eq:proxy.pl.identity}. 
\end{proof}

We next prove guarantees for GD when the objective satisfies proxy convexity.  
\begin{proof}[Proof of Theorem \ref{thm:proxy.convexity}]
By the definition of proxy convexity,
\begin{align} \nonumber 
\norm{\wt {t} - v}^2 - \norm{\wt {t+1}-v}^2 &= 2 \eta \sip{\nabla f(\wt {t})}{\wt {t}-v} - \eta^2 \norm{\nabla f(\wt {t})}^2\\  \nonumber
&\geq 2 \eta [g(\wt {t}) - h(v)] - \eta^2 \norm{\nabla f(\wt {t})}^2\\
&= 2 \eta [ g(\wt {t}) - h(v) - (\eta/2) \snorm{\nabla f(\wt {t})}^2].
 \label{eq:proxy.convexity.key}
\end{align}
For case (a), we have $\snorm{\nabla f(\wt {t})}\leq L_1$, so that the above becomes 
\[ \norm{\wt {t} - v}^2 - \norm{\wt {t+1}-v}^2 \geq 2 \eta [ g(\wt {t}) - h(v)  - \eta L_1^2/2 ].\]
Dividing both sides by $2\eta T$ and summing from $t=0, \dots, T-1$, we get
\begin{align}  \nonumber 
    \f 1 T \summm t 0 {T-1} g(\wt {t}) \leq \f 1 T \summm t 0 {T-1} h(v) + \f{\eta L_1^2}2 + \f{\norm{\wt 0-v}^2 - \norm{\wt {t}-v}^2}{2\eta T}.
\end{align}
Dropping the $-\snorm{\wt {t}-v}^2$ term, 
\[ \min_{t < T}g(\wt {t}) \leq \frac 1 T \summm t 0 {T-1}  g(\wt {t}) \leq h(v) + \f{\eta L_1^2}{2} + \f{ \norm{\wt 0-v}^2}{2\eta T}.\]
In particular, for $\eta \leq \eps L_1^{-2}$ and $T =  \eta^{-1} \eps^{-1} \norm{\wt 0-v}^2$, we get
\[ \min_{t < T} g(\wt {t}) \leq h(v) + \eps.\]
For case (b), $\snorm{\nabla f(\wt {t})}^2 \leq 2 L_2 g(\wt {t})$ so that \eqref{eq:proxy.convexity.key} becomes
\begin{align*} 
\norm{\wt {t} - v}^2 - \norm{\wt {t+1}-v}^2 &\geq 2 \eta [ g(\wt {t}) - h(v)  - \eta L_2 g(\wt {t}) ] \\
&= 2 \eta \l[ (1 - \eta L_2) g(\wt {t}) - h(v)\r] .
\end{align*}
Dividing both sides by $2\eta T(1 - \eta L_2)$ and summing from $t=0, \dots, T-1$, 
\begin{align}  \nonumber 
    \f 1 T \summm t 0 {T-1} g(\wt {t}) &\leq \f 1 {1-\eta L_2} h(v) + \f{\norm{\wt 0-v}^2 - \norm{\wt {t}-v}^2}{2\eta T(1 - \eta L_2)} \\  \nonumber 
    &\leq (1 + 2\eta L_2) h(v) + \f{(1 + 2 \eta L_2)\norm{\wt 0-v}^2}{2\eta T},
\end{align}
where in the last line we have used that $\eta \leq L_2^{-1}/2$ and that $1/(1-x)\leq 1+2x$ on $[0,1/2]$.   In particular, for $T =  \eta^{-1} \eps^{-1} \norm{\wt 0-v}^2$, we have
\begin{align*} \min_{t < T} g(\wt {t}) \leq \f 1 T \summm t 0 {T-1} g(\wt {t}) \leq (1 + 2 \eta L_2) h(v) + \f 1 2 (1 + 2 \eta L_2)\eps  \leq (1 + 2 \eta L_2) h(v) + \eps.
\end{align*}
\end{proof}

\section{Additional Related Work}\label{sec:additional.related}
The Polyak--Lojasiewicz inequality can be dated back to the original works of~\citet{polyak1963} and~\citet{lojasiewicz1963}.  Recent work by~\citet{karimi2016.linearconvpl} proved linear convergence under the PL condition and showed that the PL condition is one of the weakest assumptions under which linear convergence is possible.  In particular, they showed that the error bound inequality~\cite{luo1993errorbound}, essential strong convexity~\cite{liu2015essentialstrongconvexity}, weak strong convexity~\cite{necoara2019weakstrongconvexity}, and the restricted secant inequality~\cite{zhang2013restrictedsecant} are all assumptions under which linear convergence is possible and that each of these assumptions implies the PL inequality. 

As we described in Section~\ref{sec:proxy.vs.other}, the standard PL condition was shown to hold under certain assumptions for neural network objective functions~\cite{hardt2018identity,xie2017diverse,zhou2017characterization,charles2018stability}.  In addition to those covered in this paper, there are a number of other provable guarantees for generalization of SGD-trained networks which rely on a variety of different techniques, such as tensor methods~\cite{li2020relubeyondntk} and utilizing connections with partial differential equations by way of mean field approximations~\cite{mei2018meanfieldview,chizat2018globalconv,mei2019mean,chen2020generalized}. 

In the optimization literature, recent work has shown that SGD can efficiently find stationary points and can escape saddle points~\cite{ge2015escapesaddle,fong2019escapesaddle}.  As the proxy PL inequality implies guarantees for the proxy objective function at stationary points of the original optimization objective, our framework can naturally be used for other optimization algorithms that are known to efficiently find stationary points, such as  
SVRG~\cite{allen2016variance,reddi2016svrg},
Natasha2~\cite{allenzhu2018natasha2}, SARAH/SPIDER~\cite{nguyen2017stochastic,fang2018spider}, and SNVRG~\cite{zhou2018snvrg}.

\section{Conclusion}
In this paper we have introduced the notion of proxy convexity and proxy PL inequality and developed guarantees for learning with stochastic gradient descent under these conditions.  We demonstrated that many recent works in the learning of neural networks with gradient descent can be framed in terms of optimization problems that satisfy either proxy convexity or a proxy PL inequality.  We believe the proxy convexity and proxy PL inequality approaches can be used to derive new optimization guarantees for structured non-convex optimization problems.  


\section*{Acknowledgments and Disclosure of Funding}
QG is partially supported by the National Science Foundation CAREER Award 1906169, IIS-1855099 and IIS-2008981. 
SF acknowledges the support of the NSF and the Simons Foundation for the Collaboration on the Theoretical Foundations of Deep Learning through awards DMS-2031883 and \#814639.
 The views and conclusions contained in this paper are those of the authors and should not be interpreted as representing any funding agencies.

\printbibliography
\end{document}